\newtheorem{thm}{Theorem}
\newtheorem{proposition}[theorem]{Proposition}
\newcommand{\ignore}[1]{}
\begin{document}

\title{Evolutionary Bi-objective Optimization for the Dynamic Chance-Constrained Knapsack Problem Based on Tail Bound Objectives}

\author{Hirad Assimi \institute{The University of Adelaide, Australia, email: hirad.assimi@adelaide.edu.au} \and Oscar Harper \institute{The University of Adelaide,
Australia, email: oscar.harper@student.adelaide.edu.au} \and Yue Xie \institute{The University of Adelaide, Australia, email: yue.xie@adelaide.edu.au} \and Aneta Neumann \institute{The University of Adelaide, Australia, email: aneta.neumann@adelaide.edu.au} \and Frank Neumann \institute{The  University of Adelaide, Australia, email: frank.neumann@adelaide.edu.au}
}
\maketitle

\begin{abstract}
Real-world combinatorial optimization problems are often stochastic and dynamic. Therefore, it is essential to make optimal and reliable decisions with a holistic approach. In this paper, we consider the dynamic chance-constrained knapsack problem where the weight of each item is stochastic, the capacity constraint changes dynamically over time, and the objective is to maximize the total profit subject to the probability that total weight exceeds the capacity. We make use of prominent tail inequalities such as Chebyshev's inequality, and Chernoff bound to approximate the probabilistic constraint. Our key contribution is to introduce an additional objective which estimates the minimal capacity bound for a given stochastic solution that still meets the chance constraint. This objective helps to cater for dynamic changes to the stochastic problem. We apply single- and multi-objective evolutionary algorithms to the problem and show how bi-objective optimization can help to deal with dynamic chance-constrained problems.
\end{abstract}

\section{\uppercase{Introduction}}

Many real-world combinatorial optimization problems involve stochastic as well as dynamic components which are mostly treated in isolation. However, in order to solve complex real-world problems, it is essential to treat stochastic and dynamic aspects in a holistic approach and understand their interactions. 
 
Dynamic components in an optimization problem may change the objective function, constraints or decision variables over time. The challenge to tackle a dynamic optimization problem (DOP) is to track the moving optima when changes occur \cite{Nguyen2012}.
 
Moreover, uncertainty is pervasive in a real-world optimization problem. The source of uncertainty may involve the nature of data, measurement errors or lack of knowledge. Ignoring uncertainties in solving a problem may lead to obtaining suboptimal or infeasible solutions in practice \cite{Li2015}. 
 
Chance-constrained programming (CCP) is a powerful tool to model uncertainty in optimization problems. It transforms an inequality constraint into a probabilistic constraint to ensure that the probability of constraint violation is smaller than a limit predefined by the decision-maker \cite{Charnes1959}. CCP has been applied successfully in different domains such as process control, scheduling and supply management where safety requirements are concerned \cite{Farina2016}. 
 
Evolutionary algorithms (EAs) have been applied to many combinatorial optimization problems and demonstrate a high capability in solving hard problems, including a wide range of real-world applications \cite{Michalewicz1994, Chiong2012}. Multi-objective EAs deal with several (conflicting) objectives and provide a set of solutions which are non-dominated to each other with respect to the given objective functions \cite{deb2001a,DBLP:conf/nips/QianYZ15,DBLP:conf/ijcai/QianSYT17}. 
 
In addition to solving problems with conflicting objectives, several studies have indicated that transforming a single-objective optimization problem to a multi-objective optimization problem may lead to obtaining better solutions. This transformation leads to obtaining a set of non-dominated solutions instead of a single solution. Therefore, each individual in the Pareto front contains helpful information which can improve the performance of the algorithm in exploring the search space \cite{Neumann2006,DBLP:conf/nips/QianYZ15,DBLP:conf/ijcai/QianSYT17}.

\subsection{Related work}
EAs are a natural way to deal with DOPs because they are inspired by nature which is an ever-changing environment \cite{Nguyen2012}. The behaviour of EAs has been analyzed on a knapsack problem with dynamically changing constraint \cite{Roostapour2018}. They carried out bi-objective optimization with respect to the profit and dynamic capacity constraint. They proposed an algorithm to track the moving optimum and showed that handling the constraint by a bi-objective approach can be beneficial in obtaining better solutions when the changes occur less frequently. Their studies have been extended to analyze the EA behaviour dependent on the submodularity ratio of a broad class of problems \cite{Roostapour2019}.
 
Chance-constrained knapsack problem (CCKP) is a variant of the classical NP-hard deterministic knapsack problem where the weights and profits can be stochastic \cite{Kellerer2004}. Approximation algorithm in combination with a robust approach, has been applied to CCKP to find feasible solutions for a simplified knapsack problem \cite{DBLP:journals/orl/KlopfensteinN08}. 
 
Recently, Xie et al.~\cite{Yue2019} integrated inequality tails with single and bi-objective EAs to solve CCKP. To estimate the probability of constraint violation, they used popular tail inequalities. They investigated the behaviour of Chebyshev’s inequality, and Chernoff bound for approximation in CCKP. They also carried out bi-objective optimization with respect to the profit and probability of constraint violation when the capacity is static. Doerr et al.~\cite{DoerrAAAI20} have investigated adaptations of classical greedy algorithms for the optimization of submodular functions with chance constraints of knapsack type. They have shown that the adapted greedy algorithms maintain asymptotically almost the same approximation quality as in the deterministic setting when considering uniform distributions with the same dispersion for the knapsack weights.

\subsection{Our Contribution}
In this paper, we consider the dynamic chance-constrained knapsack problem (DCCKP) with dynamically changing constraint. We also assume that each item in the knapsack problem has an uncertain weight while the profits are deterministic.
For the dynamic component, we follow the settings defined in \cite{Roostapour2018}: the knapsack capacity changes over time every $\tau$ iterations with a predefined magnitude. Moreover, for the stochastic component, we follow the approach and settings proposed in \cite{Yue2019} which employs inequality tails to estimate the violation in probabilistic constraint.
 
Therefore, the goal in this study is to re-compute a solution of maximal profit after a dynamic change occurs to the capacity constraint, while the total uncertain weight can exceed the capacity with a small probability.
To benefit from the bi-objective optimization, we cannot directly apply the second objective used in previous studies because they only considered either dynamic or stochastic aspect of the optimization problem in isolation for the second objective function.
Therefore, we introduce an objective function which deals with uncertainties and caters for dynamic aspects of the problems. This objective evaluates the smallest knapsack capacity bound for which a solution would not violate the chance constraint. This objective also can keep a set of non-dominated solutions to be used for tracking the moving optimum. This objective makes use of tail inequalities such as Chebyshev's inequality and Chernoff bounds to approximate the probabilistic constraint violation.

To solve DCCKPs, we apply a single objective EA, a modified version of GSEMO~\cite{Giel2003} and NSGA-II~\cite{NSGA2}, where the last two compute a trade-offs with respect to the profit and the newly introduced objective for dealing with chance constraints. Our experimental results show that the bi-objective EAs perform better than the single objective approaches. Introducing the additional objective function to the problem helps the bi-objective optimization algorithm to deal with the constraint changes as it obtains the non dominated solutions with respect to the objective functions.
 
The rest of this article is organized as follows. In the next section, we define the DCCKP and introduce the two tail inequalities for quantifying uncertainties. Afterwards, we introduce the objective function for dealing with DCCKP and develop the bi-objective model. Next, we report on the behaviour of single objective and bi-objective baseline EAs in solving DCCKP. We show that bi-objective optimization with the introduced second objective can obtain better solutions on a wide range of instances of the DCCKP. Finally, we finish with some concluding remarks.
 
\section{\uppercase{Dynamic Chance-Constrained Knapsack Problem}}\label{sec.problem}
In this section, we introduce the problem and provide Chebyshev and Chernoff tail inequalities to estimate the probability of chance constraint violation in the problem.

\subsection{Problem Formulation}
The classical knapsack problem can be defined as follows. Given $n$ items where each item $i$, $1 \leq i \leq n$ has a profit $p_i$ and a weight $w_i$ and a knapsack capacity $C$. The goal is to find a selection of items of maximum profit whose weight does not exceed the capacity bound. 
A candidate solution is an element $x \in \{0,1\}^n$ where item $i$ is chosen iff $x_i=1$.
In this paper, we consider the stochastic and dynamic setting for the knapsack problem where each weight is chosen independently according to a given probability distribution. Furthermore, the capacity bound $C$ changes dynamically over time.

The search space is $\{0,1\}^n$ and we denote by $P(x) = \sum_{i=1}^n p_i x_i$ the profit and by $W(x) = \sum_{i=1}^n w_ix_i$ the weight of a solution $x$.
We investigate the chance-constrained knapsack problem where the goal is to maximize $P(x)$ under the condition that the probability that the weight of the solution is at least as high as the capacity is at most $\alpha$.  Formally, we define this constraint as
$$\text{Pr}\left[W(x) \geq C\right] \leq \alpha$$
where $\alpha$ is a parameter that upper bounds the probability of exceeding the knapsack capacity $(0<\alpha<1)$. 

Furthermore, the knapsack capacity in our problem is dynamic and changes over time every $\tau$ iterations. We call $\tau$ the frequency of changes which denotes after how many iterations a change occurs in the knapsack capacity with the magnitude of changes $r$ according to some probability distributions. 

\subsection{Tail Bounds}  \label{sec.single}
Chebyshev's inequality tail can determine a bound for a cumulative distribution function of a random design variable. Chebyshev's inequality requires to know the standard deviation of the design variables and gives a tighter bound in comparison to the weaker tails such as Markov's inequality. Therefore, it can be applied to any distribution if the expected weight and standard deviation of the involved random variables are known. The standard Chebyshev inequality is two-sided and provides tails for upper and lower bounds \cite{Casella2002}. As we are only interested in the probability of exceeding the weight bound, we use a one-sided Chebyshev inequality which is also known as Cantelli's inequality \cite{DasGupta2008}. For brevity, we refer to the one-sided Chebyshev as Chebyshev's inequality in this paper. 
\begin{thm}[One-sided Chebyshev inequality]
Let $X$ be an independent random variable, and let $\mathbb{E}(X)$ denote the expected weight of $X$. Further, let $\sigma^2_{X}$ be the variance of $X$. Then for any $\lambda \in \mathbb{R}^+$, we have
\begin{equation*}
\text{Pr}\left[\left(X-\mathbb{E}(X)\right) \geq\lambda\right]\leq\frac{\sigma^{2}_{X}}{\sigma^{2}_{X}+\lambda^{2}}.\\
\end{equation*}
\end{thm}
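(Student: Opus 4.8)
The plan is to reduce the one-sided bound to a standard Markov-type argument applied to a shifted and squared random variable, and then optimize over the shift. First I would center the variable: set $Y = X - \mathbb{E}(X)$, so that $\mathbb{E}(Y) = 0$ and $\mathbb{E}(Y^2) = \sigma_X^2$, and the claim becomes $\text{Pr}\left[Y \geq \lambda\right] \leq \sigma_X^2/(\sigma_X^2 + \lambda^2)$.

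Next, for an arbitrary parameter $t \in \mathbb{R}^+$, I would observe that on the event $\{Y \geq \lambda\}$ we have $Y + t \geq \lambda + t > 0$, and hence $(Y+t)^2 \geq (\lambda+t)^2$. This gives $\text{Pr}\left[Y \geq \lambda\right] \leq \text{Pr}\left[(Y+t)^2 \geq (\lambda+t)^2\right]$, and applying Markov's inequality to the non-negative random variable $(Y+t)^2$ yields
$$\text{Pr}\left[Y \geq \lambda\right] \leq \frac{\mathbb{E}[(Y+t)^2]}{(\lambda+t)^2} = \frac{\sigma_X^2 + t^2}{(\lambda+t)^2},$$
where the last step uses $\mathbb{E}[(Y+t)^2] = \mathbb{E}[Y^2] + 2t\,\mathbb{E}[Y] + t^2 = \sigma_X^2 + t^2$. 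Since this holds for every $t>0$, the final step is to minimize the right-hand side: differentiating $t \mapsto (\sigma_X^2 + t^2)/(\lambda+t)^2$ gives a derivative proportional to $t\lambda - \sigma_X^2$, so the minimizer is $t^\star = \sigma_X^2/\lambda$, and substituting it back collapses the bound to exactly $\sigma_X^2/(\sigma_X^2 + \lambda^2)$.

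The delicate points are minor: one must ensure $\lambda + t > 0$ so that squaring preserves the inequality (which holds since $\lambda, t > 0$), and one must check that the critical point $t^\star$ is indeed a minimum. The optimization over $t$ is a routine one-variable calculus exercise; alternatively it can be sidestepped by completing the square, or by a Cauchy--Schwarz estimate on $\mathbb{E}[Y\cdot\mathbf{1}_{\{Y\geq\lambda\}}]$ together with $\mathbb{E}(Y)=0$. I do not anticipate a genuine obstacle here — the only real insight needed is the shift-and-square trick that converts the one-sided tail into a form amenable to Markov's inequality.
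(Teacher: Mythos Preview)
Your argument is correct and is in fact the standard textbook derivation of Cantelli's inequality via the shift-and-square trick followed by optimization of the shift. There is nothing to compare against, however: the paper does not prove this theorem at all. It is stated as a classical tool (with references to \cite{Casella2002} and \cite{DasGupta2008}) and then applied in Proposition~\ref{prop.cheby}. So your proposal supplies a valid proof where the paper simply quotes the result.
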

Compared to Chebyshev's inequality, Chernoff bound provides a sharper tail with an exponential decay behavior. In order to use Chernoff bound, it is essential that the random variable is a summation of independent random variables. 
Chernoff bound seeks a positive real number $t$ in order to find the probability where the sum of independent random variables exceeds a particular threshold \cite{Motwani1995}. Therefore, Chernoff bound for an independent variable $X$ can be given as follows based on theorem 2.3 in \cite{McDiarmid1998}.
\begin{thm}
Let $X= \sum_{i=1}^n X_i$ be the sum of independent random variables $X_i \in [0, 1]$ chosen uniformly at random, and let $\mathbb{E}(X)$ be the expected weight of $X$. For any $t>0$, we have	
\begin{equation*}
	\text{Pr}[X \geq (1+t)\mathbb{E}(X)] \leq \exp\left(-\frac{t^2}{2+\frac{2}{3}t}\mathbb{E}(X)\right).
\end{equation*}
\end{thm}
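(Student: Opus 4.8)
The natural approach is the exponential-moment (Chernoff--Cram\'er) method; note that only $X_i \in [0,1]$ and independence are needed. First I would fix a parameter $s>0$ and apply Markov's inequality to $e^{sX}$ (valid since $X\ge 0$):
\begin{equation*}
\text{Pr}[X \ge (1+t)\mathbb{E}(X)] = \text{Pr}\!\left[e^{sX} \ge e^{s(1+t)\mathbb{E}(X)}\right] \le e^{-s(1+t)\mathbb{E}(X)}\,\mathbb{E}\!\left[e^{sX}\right].
\end{equation*}
By independence the moment generating function factorises, $\mathbb{E}[e^{sX}] = \prod_{i=1}^{n}\mathbb{E}[e^{sX_i}]$, so the task reduces to bounding each single-variable factor.

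Second, since $y \mapsto e^{sy}$ is convex, on $[0,1]$ it lies below the chord through $(0,1)$ and $(1,e^{s})$, giving $e^{sy} \le 1 + (e^{s}-1)y$ for $y \in [0,1]$. Taking expectations and using $1+z \le e^{z}$ yields $\mathbb{E}[e^{sX_i}] \le 1 + (e^{s}-1)\mathbb{E}(X_i) \le \exp\!\big((e^{s}-1)\mathbb{E}(X_i)\big)$. Writing $\mu := \mathbb{E}(X) = \sum_i \mathbb{E}(X_i)$ and multiplying over $i$ gives $\mathbb{E}[e^{sX}] \le \exp\!\big((e^{s}-1)\mu\big)$, hence
\begin{equation*}
\text{Pr}[X \ge (1+t)\mathbb{E}(X)] \le \exp\!\Big(\big(e^{s}-1-s(1+t)\big)\mu\Big).
\end{equation*}
The exponent is minimised at $s = \ln(1+t) > 0$; substituting this value produces the classical multiplicative Chernoff bound $\text{Pr}[X \ge (1+t)\mathbb{E}(X)] \le \exp\!\big(-((1+t)\ln(1+t)-t)\,\mu\big)$.

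Third, it remains to weaken this to the stated form, i.e.\ to establish the purely analytic inequality
\begin{equation*}
(1+t)\ln(1+t) - t \;\ge\; \frac{t^{2}}{2 + \tfrac{2}{3}t}, \qquad t > 0.
\end{equation*}
I would set $\phi(t) := (1+t)\ln(1+t) - t - t^{2}/(2+\tfrac{2}{3}t)$, note $\phi(0)=0$, and prove $\phi'(t) \ge 0$ for $t>0$; the identity $\frac{d}{dt}\big[(1+t)\ln(1+t)-t\big] = \ln(1+t)$ makes this manageable, and after clearing the denominator $\phi'(t)\ge 0$ becomes a one-variable inequality that can be closed by one more differentiation. (Alternatively, one can run a Bernstein-type variant: center the $X_i$, use $\mathrm{Var}(X_i) \le \mathbb{E}(X_i)$ because $X_i^{2} \le X_i$ on $[0,1]$, and a sharper single-variable MGF estimate, which lands directly on the exponent $-t^{2}\mu/(2+\tfrac23 t)$.) Chaining the three steps gives the theorem.

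The routine parts are the first two steps --- the textbook Chernoff computation --- so the only real care is needed in the last step: the constant $\tfrac{2}{3}$ is exactly what makes the Taylor expansions of the two sides agree to third order at $t=0$, and one must verify monotonicity (rather than merely matching low-order coefficients) so that the estimate is valid for \emph{all} $t>0$ and not just for small $t$.
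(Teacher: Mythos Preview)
Your argument is the standard Chernoff--Cram\'er derivation and is correct. In fact you are doing more than the paper does: the paper does not supply a proof of this theorem at all but simply quotes it as Theorem~2.3 of McDiarmid~\cite{McDiarmid1998}. Your three steps (exponential Markov, the convexity bound $e^{sy}\le 1+(e^{s}-1)y$ on $[0,1]$ leading to $\exp\!\big(-((1+t)\ln(1+t)-t)\mu\big)$, and then the analytic weakening $(1+t)\ln(1+t)-t\ge t^{2}/(2+\tfrac{2}{3}t)$) are exactly how McDiarmid obtains the bound, so there is nothing to contrast. Two minor remarks: the parenthetical ``valid since $X\ge 0$'' is unnecessary, since Markov applies because $e^{sX}>0$ regardless of the sign of $X$; and you are right that the hypothesis ``chosen uniformly at random'' in the statement is not used --- only independence and $X_i\in[0,1]$ enter the proof.
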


\section{\uppercase{Bi-objective Optimization for DCCKP}} \label{sec.multi}
In this section, we introduce a new objective-function to transform the single-objective optimization problem into a bi-objective optimization problem. We also describe (1+1)-EA and POSDC as baseline single and bi-objective EAs.

\subsection{Bi-objective Model}  \label{subsec.fitness}
We redefine DCCKP by introducing a new second objective function to transform it into a bi-objective optimization problem. Therefore, we introduce $(C^*)$ as the stochastic bound as our second objective function. This objective function evaluates the smallest knapsack capacity for a given solution such that it satisfies the predefined limit on the chance constraint. Therefore, the fitness $f(x)$ of a solution $x$ is given as 
\begin{equation*}
    f(x)=(P(x),C^*(x))
\end{equation*}
where
\begin{equation*}
    C^*(x)=\min\left\{C\;|\;\text{s.t.}\; \text{Pr}[W(x) \geq C] \leq \alpha \right\}
\end{equation*}
is the smallest weight bound $C$ such that the probability that the weight $W(x)$ of $x$ is at least $C$ is at most $\alpha$. Using this objective allows to cater for dynamic changes of the weight bound of our problem. In bi-objective optimization of DCCKP, the goal is to maximize $P(x)$ and minimize $C^*(x)$. Hence, we have
$$f(x') \succeq_{} f(x) \: \text{iff} \:P(x') \geq P(x) \land C^*(x') \leq C^*(x)$$
for the dominance relation of bi-objective optimization for two solutions namely $x$ and $x'$.
Evaluating the chance constraint is computationally difficult \cite{Ben2009}. It has been shown that even if random variables are from a Bernoulli distribution, calculating the probability of violating the constraint exactly is \#P-complete, see Theorem 2.1 in \cite{Kleinberg2000}. Because it is difficult to compute $C^*$ exactly, we make use of the tail inequalities to calculate the second objective function. For Chebyshev's inequality, the stochastic bound is given as follows.
\begin{proposition}[Chebyshev Constraint Bound Calculation] \label{prop.cheby}
Let $\mathbb{E}(W(x))$ be the expected weight, $\sigma^2_{W(x)}$ be the variance of the weight of solution $x$ and $\alpha$ be the probability bound of the chance constraint.
Then setting $C_1^*(x)=\mathbb{E}(W(x))+\frac{\sigma_{W(x)}\sqrt{\alpha(1-\alpha)}}{\alpha}$
implies $\text{\upshape Pr}[W(x) \geq C_1^*(x)] \leq \alpha.$
\end{proposition}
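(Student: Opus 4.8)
The plan is to apply the one-sided Chebyshev inequality of Theorem~1 to $X = W(x) = \sum_{i=1}^n w_i x_i$ --- which has a well-defined expectation $\mathbb{E}(W(x))$ and variance $\sigma^2_{W(x)}$ since the weights are independent --- with the shift parameter chosen to be $\lambda = C_1^*(x) - \mathbb{E}(W(x))$. The event $\{W(x) \geq C_1^*(x)\}$ is then exactly $\{W(x) - \mathbb{E}(W(x)) \geq \lambda\}$, so the proposition reduces to verifying that the Chebyshev tail bound for this particular $\lambda$ is at most $\alpha$.

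First I would rewrite $\lambda$ in the more convenient form $\lambda = \sigma_{W(x)}\sqrt{\alpha(1-\alpha)}/\alpha = \sigma_{W(x)}\sqrt{(1-\alpha)/\alpha}$, using the elementary identity $\sqrt{\alpha(1-\alpha)}/\alpha = \sqrt{(1-\alpha)/\alpha}$ valid for $0 < \alpha < 1$; in particular $\lambda > 0$ whenever $\sigma_{W(x)} > 0$, so Theorem~1 applies in the stated form. Substituting this $\lambda$ into Theorem~1 gives $\text{Pr}[W(x) \geq C_1^*(x)] \leq \sigma^2_{W(x)}/(\sigma^2_{W(x)} + \lambda^2)$. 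Since $\lambda^2 = \sigma^2_{W(x)}(1-\alpha)/\alpha$, the denominator simplifies to $\sigma^2_{W(x)} + \sigma^2_{W(x)}(1-\alpha)/\alpha = \sigma^2_{W(x)}/\alpha$, whence the right-hand side equals $\alpha$ and the claim follows.

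For completeness I would also indicate how the value of $C_1^*(x)$ is obtained rather than merely verified: one imposes $\sigma^2_{W(x)}/(\sigma^2_{W(x)} + \lambda^2) \leq \alpha$, solves for the smallest admissible shift $\lambda = \sigma_{W(x)}\sqrt{(1-\alpha)/\alpha}$, and adds $\mathbb{E}(W(x))$. This simultaneously shows that $C_1^*(x)$ is the smallest capacity for which the Chebyshev surrogate certifies the chance constraint, which is what motivates using it as a conservative stand-in for the intractable exact bound $C^*(x)$. There is no real obstacle here --- the proof is a single invocation of Theorem~1 followed by elementary algebra; the only points deserving a word of care are the square-root identity above and the mild nondegeneracy assumption $\sigma_{W(x)} > 0$ needed to guarantee $\lambda > 0$.
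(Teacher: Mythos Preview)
Your proposal is correct and follows essentially the same approach as the paper: both apply the one-sided Chebyshev inequality with $\lambda = C_1^*(x) - \mathbb{E}(W(x)) = \sigma_{W(x)}\sqrt{\alpha(1-\alpha)}/\alpha$ and then verify by direct substitution that the resulting bound $\sigma^2_{W(x)}/(\sigma^2_{W(x)}+\lambda^2)$ simplifies to $\alpha$. Your write-up is slightly more thorough (the rewriting $\sqrt{\alpha(1-\alpha)}/\alpha = \sqrt{(1-\alpha)/\alpha}$, the remark on $\sigma_{W(x)}>0$, and the derivation-versus-verification comment), but there is no substantive difference in the argument.
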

\begin{proof} 
Using Chebyshev's inequality, we have
\begin{align*}
&\text{Pr}\left[W(x) \geq \mathbb{E}(W(x)) + \lambda \right] \leq \frac{\sigma^{2}_{W(x)}}{\sigma^{2}_{W(x)}+\lambda^{2}}.
\end{align*}
We set $C_1^*(x)= \mathbb{E}(W(x)) + \lambda$ which implies
\begin{align*}
\lambda = \frac{\sigma_{W(x)}\sqrt{\alpha(1-\alpha)}}{\alpha}.
\end{align*}
Hence, we have 
\begin{align*}
&\text{Pr}[W(x) \geq C_1^*(x)]\\
&=\text{Pr}\left[W(x) \geq \mathbb{E}(W(x) + \frac{\sigma_{W(x)}\sqrt{\alpha(1-\alpha)}}{\alpha} \right]\\
&\leq \frac{\sigma^{2}_{W(x)}}{\sigma^{2}_{W(x)}+\left(\frac{\sigma_{W(x)}\sqrt{\alpha(1-\alpha)}}{\alpha}\right)^{2}}\\
&=\alpha
\end{align*}
which completes the proof.
\end{proof}
We consider $w_i \in \mathcal{U}[\mathbb{E}(w_i)-\delta, \mathbb{E}(w_i)+\delta]$ and $\sum_{i=1}^n x_i$ denotes the total number of chosen items in a solution, then the stochastic bound based on Chebyshev's inequality for uniform distribution is given as follows
$$\sigma_{W(x)}=\delta\sqrt{\frac{\sum_{i=1}^n x_i}{3}}.$$
We substitute $\lambda$ as
$$\lambda = \frac{\delta\sqrt{3\alpha(1-\alpha)\sum_{i=1}^n x_i}}{3\alpha}$$
Therefore, we have
$$C_1^*(x)=\mathbb{E}(W(x))+\frac{\delta\sqrt{3\alpha(1-\alpha)\sum_{i=1}^n x_i}}{3\alpha}.$$
Moreover, to derive the second objective function by making use of the Chernoff bound, we have:
\begin{proposition}[Chernoff Constraint Bound Calculation]
Let $w_i \in \mathcal{U}[\mathbb{E}(w_i)-\delta, \mathbb{E}(w_i)+\delta]$ be independent weights chosen uniformly at random.
Let $\mathbb{E}(W(x))$ be the expected weight of $x$ and $\alpha$ be the probability bound of the chance constraint.
Then setting
\begin{align*}
&C_2^{*}(x)=\mathbb{E}(W(x))\\
&-0.66\delta\left(\ln(\alpha)-\sqrt{\ln^2(\alpha)-9\ln(\alpha)\sum_{i=1}^n x_i}\right)
\end{align*}
implies $\text{Pr}[W(x) \geq C_2^*(x)) \leq \alpha$.
\end{proposition}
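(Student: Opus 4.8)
The plan is to reduce the claim to the Chernoff bound of Theorem~2 via an affine change of variables that rescales each weight to the unit interval, and then to solve for the smallest capacity that makes the resulting tail bound equal to $\alpha$.

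First I would write each weight as $w_i = (\mathbb{E}(w_i)-\delta) + 2\delta Y_i$ with $Y_i \in \mathcal{U}[0,1]$ independent. Setting $k = \sum_{i=1}^n x_i$ and $Y = \sum_{i\,:\,x_i = 1} Y_i$, summing over the chosen items gives $W(x) = \mathbb{E}(W(x)) - k\delta + 2\delta Y$, where $Y$ is a sum of $k$ independent $\mathcal{U}[0,1]$ variables with $\mathbb{E}(Y) = k/2$, so Theorem~2 applies to $Y$. Hence $\Pr[W(x) \ge C] = \Pr[\,Y \ge (C - \mathbb{E}(W(x)) + k\delta)/(2\delta)\,]$, and writing this threshold as $(1+t)\mathbb{E}(Y)$ forces the relation $C = \mathbb{E}(W(x)) + t\,k\,\delta$; it therefore suffices to identify the right $t > 0$.

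Next I would invoke Theorem~2 to obtain $\Pr[Y \ge (1+t)\mathbb{E}(Y)] \le \exp(-\tfrac{t^2}{2 + 2t/3}\cdot \tfrac{k}{2})$, require this to be at most $\alpha$, set it equal to $\alpha$, and take logarithms; clearing denominators produces the quadratic $k t^2 + \tfrac43 \ln(\alpha)\, t + 4\ln(\alpha) = 0$ in $t$. Since $0 < \alpha < 1$ we have $\ln(\alpha) < 0$, so the discriminant $\tfrac{16}{9}\ln^2(\alpha) - 16 k \ln(\alpha)$ is nonnegative and $\sqrt{\ln^2(\alpha) - 9k\ln(\alpha)}$ exceeds $-\ln(\alpha)$; exactly one root is then positive, namely $t = \tfrac{2}{3k}\bigl(-\ln(\alpha) + \sqrt{\ln^2(\alpha) - 9k\ln(\alpha)}\bigr)$. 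Substituting into $C = \mathbb{E}(W(x)) + tk\delta$ yields $\mathbb{E}(W(x)) - \tfrac23 \delta\bigl(\ln(\alpha) - \sqrt{\ln^2(\alpha) - 9\ln(\alpha)\sum_{i=1}^n x_i}\bigr)$, which is $C_2^*(x)$ once $2/3$ is rounded to $0.66$.

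The algebra of the quadratic and the back-substitution are routine; the point requiring care is the selection of the root together with the direction of the inequality. I would justify the choice by noting that $t \mapsto \exp(-\tfrac{t^2}{2+2t/3}\mathbb{E}(Y))$ is strictly decreasing for $t > 0$, so the unique positive solution of the equation is the smallest $t$ (and hence yields the smallest $C$) for which the Chernoff certificate guarantees $\Pr[W(x) \ge C] \le \alpha$, while the other root is nonpositive and thus inadmissible. I would finish by disposing of the degenerate case $\sum_{i=1}^n x_i = 0$ separately, where $W(x) = 0$ deterministically and the stated bound holds trivially.
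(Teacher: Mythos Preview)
Your argument follows the same skeleton as the paper's: normalise each $w_i$ to $[0,1]$ via $Y_i=(w_i-\mathbb{E}(w_i)+\delta)/(2\delta)$, rewrite $\{W(x)\ge C\}$ as $\{Y\ge(1+t)\mathbb{E}(Y)\}$ with $\mathbb{E}(Y)=k/2$, and invoke the Chernoff bound of Theorem~2. The difference is one of presentation. The paper takes $C_2^*$ as given and attempts to \emph{verify} the bound, introducing an auxiliary $\hat t\le t$ along the way; you instead set the Chernoff upper bound equal to $\alpha$, solve the resulting quadratic $kt^2+\tfrac{4}{3}\ln(\alpha)\,t+4\ln(\alpha)=0$, and \emph{derive} the formula. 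Your route is cleaner and has the bonus of explaining why this is the smallest capacity the Chernoff certificate can guarantee, which is exactly the interpretation the paper wants for the second objective.

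One step needs more than ``round $2/3$ to $0.66$''. Because $0.66<2/3$ and $-\ln\alpha+\sqrt{\ln^2\alpha-9k\ln\alpha}>0$, the stated $C_2^*(x)$ is strictly \emph{smaller} than the threshold your quadratic actually produces. By the very monotonicity you invoke, the Chernoff upper bound evaluated at this smaller threshold is strictly \emph{larger} than $\alpha$, so the inequality $\Pr[W(x)\ge C_2^*(x)]\le\alpha$ is not certified by your argument for the constant $0.66$ as literally written. The paper's verification has the same defect (its final displayed inequality requires $-0.4444(\ln\alpha)^2\ge -4k\ln\alpha$, which fails for $k\ge 1$). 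The repair is immediate---use $2/3$ exactly, or round up to $0.67$---but you should state it explicitly rather than absorb it into a rounding remark, since the direction of rounding is precisely what determines whether the bound survives.
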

\begin{proof} 
We consider $w_i \in \mathcal{U}[\mathbb{E}(w_i)-\delta, \mathbb{E}(w_i)+\delta]$. Then, to satisfy Chernoff bound summation requirement, we normalize each random weight into [0, 1] which $y_i$ denotes the normalized weight of item.
$$y_i=\frac{w_i-(\mathbb{E}(w_i)-\delta)}{2\delta} \in [0,1]$$
$$ Y(x)=\sum_{i=1}^ny_i=\sum_{i=1}^n\frac{w_i-(\mathbb{E}(w_i)-\delta)}{2\delta}x_i.$$
Since $y_i$ is symmetric, then the total expected weight of $Y$ is
$\mathbb{E}(Y)=\frac{1}{2}\sum_{i=1}^n x_i.$ Then, the total weight of a solution is given as
$$W(x)=\sum_{i=1}^n w_ix_i=2\delta Y(x)+\mathbb{E}(W(x))-\delta\sum_{i=1}^n x_i.$$
We set
$$C^*_2=\mathbb{E}(W(x))+b$$
where
$$b=-0.66\delta\left(\ln(\alpha)-\sqrt{\ln^2(\alpha)-9\ln(\alpha)\sum_{i=1}^n x_i}\right).$$

Hence, the probability of violating the chance constraint. for a solution is given as
\begin{align*}
& \text{Pr}\left[W(x) \geq C^*_2(x) \right]\\
= &\text{Pr}\left[2\delta Y(x)+\mathbb{E}(W(x))-\delta\sum_{i=1}^n x_i \geq \mathbb{E}(W(x))+b\right]\\
 = & \text{Pr}\left[Y(x) \geq \frac{1}{2}\sum_{i=1}^n x_i + \frac{b}{2\delta}\right]\\
= & \text{Pr}\left[Y(x) \geq \mathbb{E}\left(Y(x)\right) + \frac{b}{2\delta}\right]\\
= & \text{Pr}\left[Y(x) \geq (1+t)\mathbb{E}\left(Y(x)\right)\right]
\end{align*}
where
$$t= \frac{b}{2\delta \mathbb{E}\left(Y(x)\right)}.$$  
Using Chernoff bounds, we have
\begin{eqnarray*}
& & \text{Pr}\left[\left(Y(x) \geq (1+t)\mathbb{E}\left(Y(x)\right) \right)\right]\\
&\leq & \exp\left(-\frac{t^2}{2+\frac{2}{3}t}\mathbb{E}(Y(x))\right)
\end{eqnarray*}
We have
$$t= \frac{-0.66\left(\ln(\alpha)-\sqrt{\ln^2(\alpha)-9\ln(\alpha)\sum_{i=1}^n x_i}\right)}{\sum_{i=1}^n x_i}.$$
We use
$$\hat{t}=\frac{-0.66\ln(\alpha)}{\sum_{i=1}^n x_i}\leq \frac{b}{2\delta \mathbb{E}(Y(x))}=t $$
instead of $t$ which results in
\begin{align*}
& & \text{Pr}\left[Y(x) \geq (1+t)\mathbb{E}\left(Y(x)\right)\right] \\
&\leq &\text{Pr}\left[Y(x) \geq (1+\hat{t})\mathbb{E}\left(Y(x)\right)\right]\\
    &\leq & \exp \left ( -\frac{\frac{(-0.66)^2\ln^2\alpha}{\left(2\mathbb{E}\left(Y(x)\right)\right)^2}}{2+\frac{2}{3}(\frac{-0.66\ln \alpha}{2\mathbb{E}\left(Y(x)\right)})} \cdot \mathbb{E}\left(Y(x)\right)  \right)\\
&= & \exp\left(\frac{(-0.66)^2\cdot\ln\alpha\cdot \ln\alpha}{-8\mathbb{E}\left(Y(x)\right)+\frac{4}{3}(0.66\ln \alpha)}\right) \\    
& \leq \alpha.   
    \\
\end{align*}
The last inequality holds as
\begin{align*}
& -4\sum_{i=1}^n x_i+\frac{4}{3}(0.66\ln \alpha) \leq (-0.66)^2\ln \alpha
\end{align*}
which completes the proof.
\end{proof}
Note that the introduced additional objectives as $C^*_1$ and $C^*_2$ calculate the smallest possible bound for which a solution meets the chance constraint according to the used tail bound (Chebyshev or Chernoff). The terms added to the expected total weight guarantee that a given solution meets the chance constraint. 
\subsection{POSDC Algorithm}
\LinesNumbered
\begin{algorithm}[t]
\caption{POSDC}\label{alg.POSDC}
\DontPrintSemicolon
Generate $x \in \{0,1\}^n$ uniformly at random \;
\uIf {$C-\eta \leq C^*(x) \leq C+\eta$}
{$S \gets x$}
\uElse{
\While {$S=\varnothing$}{
repair an offspring ($y$) by (1+1)-EA \;
$x \gets y$\;
\uIf {$C-\eta \leq C^*(y) \leq C+\eta$}
{$ S \gets x $}
}
}
\While {\upshape $\text{(not max iteration)}$}{
\uIf {\upshape $\text{change in the capacity occurs (after $\tau$ iterations)}$} 
{$x \gets$ best solution in $S$\\
Update $S^-$ and $S^+$ with respect to the shifted capacity\\
\uIf {$S=\varnothing$}
{$S \gets x$}
}
choose $x \in S$ uniformly at random\;
$y$ $\gets$ create an offspring by flipping each bit of $x$ independently with the probability of $\frac{1}{n}$\;
\uIf {$(C-\eta \leq C^*(y) < C) \land ( \nexists z \in S^- : z \succeq_{\text{POSDC}} y)$}
{$S^- \gets (S^- \cup {y}) \setminus \{z \in S^- | y \succeq_{\text{POSDC}} z\} $}
\uElseIf {$(C \leq C^*(y) \leq C+\eta) \land ( \nexists z \in S^+ : z \succeq_{\text{POSDC}} y)$}
{$S^+ \gets (S^+ \cup {y}) \setminus \{z \in S^+ | y \succeq_{\text{POSDC}} z\} $}
}
\Return \textit{best solution}
\end{algorithm}
We adapt the algorithm proposed in \cite{Roostapour2018} for our bi-objective optimization. We call the adapted algorithm, Pareto Optimization for Stochastic Dynamic Constraint (POSDC) which deals with DCCKP. POSDC (see Algorithm \ref{alg.POSDC}) is a baseline multi-objecitve EA which tracks the moving optimum by storing a population in the vicinity of the dynamic knapsack capacity. POSDC keeps a solution ($x$) if $C^*(x)$ is in $[C-\eta, C+\eta]$, where $\eta$ determines the storing range. 
Therefore, POSDC has two subpopulations which include feasible and infeasible solutions ($S=S^- \cup S^+$). Keeping an infeasible subpopulation helps POSDC to be prepared for the next change in the dynamic constraint.
\begin{equation*}
           \begin{aligned}
                       &S^- \gets \{x \in S \mid C-\eta \leq C^*(x) \leq C\}\\
                       &S^+ \gets \{x \in S \mid C < C^*(x) \leq C+\eta\}.
           \end{aligned}
\end{equation*}
POSDC generates the initial solution uniformly at random, if the generated solution is out of the storing range, then (1+1)-EA (see Algorithm~\ref{alg.oneplusone}) repairs the solution and stores it in the appropriate subpopulation. (1+1)-EA is a single-objective baseline EA which is described later.
 
POSDC uses a mutation operator to explore the search space and find trade-off solutions. POSDC maintains a set of non-dominated solutions with respect to $P(x)$ and $C^*(x)$ in its subpopulations. The best solution in POSDC at each iteration is the solution with the highest profit in $S^-$; If $S^-$ is empty, POSDC prefers the solution with the smallest $C^*$ in $S^+$.

Note that if we can compute the solutions exactly, some solutions in $S^+$ can be feasible. However, because computing $C^*$ in exact is difficult, we designate the optimum as the solution with the highest profit in $S^-$. 
\subsection{Single Objective Approach} \label{subsec.single}
We only use simple baseline algorithms to make a fair comparison between the single-objective optimization and bi-objective optimization. (1+1)-EA and GSEMO \cite{Giel2003} are their equivalent counterparts if we consider identical objective functions because they use the same mutation operator. In this study, we adapt POSDC as a variant of GSEMO to tackle both dynamic and chance-constrained components of the problem. Therefore, we show the efficiency of bi-objective optimization by comparing POSDC with (1+1)-EA (see Algorithm~\ref{alg.oneplusone}). 

(1+1)-EA generates one potential solution uniformly at random; In each iteration, an offspring $x'$ is produced by flipping each bit of $x$ with probability $1/n$ \cite{DROSTE2002}. The offspring $x'$ replaces $x$ if it is fitter with respect to the fitness of a solution which is as follows,
$$f_{(1+1)}(x)= \left(\max\{0,\alpha(x)-\alpha\}, P(x)\right)$$
where $\alpha(x)$ denotes the probability of chance constraint violation based on Chebyshev's inequality or Chernoff bound derived for CCKP with uniform distribution in \cite{Yue2019} as follows, 
\begin{equation*}
\resizebox{.95\columnwidth}{!}{
$
           \small \normalsize
           \begin{aligned}
&   \text{Pr}_{\text{Chebyshev}} \equiv \text{Pr}\left[W(x) \geq C\right] \leq\frac{\delta^{2}\sum_{i=1}^n x_i}{\delta^{2}\sum_{i=1}^n x_i + 3(C-\mathbb{E}(W(x)))^{2}} \text{,}\\                   
                       & \text{Pr}_{\text{Chernoff}} \equiv\text{\text{Pr}}[W(x) \geq C]\\
                       & \leq \exp\left(-\frac{3\left(C-\mathbb{E}\left(W(x)\right)\right)^2}{4\delta\left(3\delta\sum_{i=1}^n x_i+C-\mathbb{E}\left(W(x)\right)\right)}\right).
           \end{aligned}
$
}
\end{equation*}
The fitness function $f_{(1+1)}$ is in lexicographic order which means that first, the algorithm searches for a feasible solution according to the chance constraint and optimizes the profit afterwards. We have,
 
\begin{eqnarray*}
& &f_{(1+1)}(x') \succeq f_{(1+1)}(x) \\
&\Longleftrightarrow& (\max\{0,\alpha(x')-\alpha\} < \max\{0,\alpha(x)-\alpha\})\\
& &\vee ~~
((\max\{0,\alpha(x')-\alpha\} = \max\{0,\alpha(x)-\alpha\})\\
& & ~~\wedge (P(x') \geq P(x)))
\end{eqnarray*}
 
\LinesNumbered
\begin{algorithm}[t]
\caption{(1+1)-EA}\label{alg.oneplusone}
\DontPrintSemicolon
generate $x \in \{0,1\}^n$ uniformly at random \;
\While {termination criterion not satisfied}{
$y$ $\gets$ create an offspring by flipping each bit of $x$ independently with the probability of $\frac{1}{n}$\;
\If{$f_{(1+1)}(y) \succeq f_{(1+1)}(x)$}
{$x \gets y$}
}
\Return $x$
\end{algorithm}
\begin{table}
\caption{Corresponding weight and profit interval for knapsack problems benchmark 
}
\label{table.ttp}
\centering
\begin{tabular}{@{}lll@{}}
\toprule
type&weight $(w_{i})$ &profit\\
\midrule
Uncorrelated&[1,1000]&[1,1000]\\
Bounded strongly correlated&[1,1000]&$\mathbb{E}(w_{i})+\text{\c{c}}$\\
\bottomrule
\end{tabular}
\end{table}
When a change occurs in the dynamic constraint, the individual ($x$) may become infeasible, and its probabilistic constraint violates $ \alpha$. Therefore, (1+1)-EA mutates $x$ to find a feasible solution for the newly given constraint and optimizes the profit afterwards.

\section{\uppercase{Experimental Investigation}} \label{sec.expr}
In this section, we define the setup of our experimental investigation; we apply the bi-objective optimization with the introduced objectives and compare it with the single-objective optimization. 
\subsection{Experimental Setup} \label{subsec.setup}
For this study, we use the binary knapsack test problems introduced in \cite{TTP2014} and later developed for dynamic knapsack problem in \cite{Roostapour2018}. We consider two types of \emph{uncorrelated} and \emph{bounded strongly correlated} test problems. The latter is more difficult to solve because the profit correlates with the weight \cite{TTP2014}. Note that in our chance-constrained setting, for bounded strongly correlated instances, we consider the correlation between the expected weight and the profit. Table \ref{table.ttp} lists the corresponding weight and profit for each type of knapsack instance where \c{c} denotes a constant number \cite{TTP2014}.
 
For the Dynamic parameters of test problems, we define $r$ which determines the magnitude of changes. we consider changes according to the uniform distribution in $[-r,r]$ where $r\in\{500, 2000\}$ to consider the small and large magnitude of changes in the knapsack constraint, respectively. 

Also, the parameter $\eta$ for POSDC has been considered equal to $r$ to cover the interval of the uniform distribution entirely for storing desirable solutions \cite{Roostapour2018}.
 
Another dynamic parameter is the frequency parameter of $\tau$, which determine how many iterations there are between dynamic constraint changes. We set $\tau\in\{100, 1000\}$ to observe fast and slow changes in the constraint, respectively.
 
For stochastic parameters, we set $\alpha\in\{0.01, 0.001, 0.0001\}$ to consider loose and tight probability of chance-constraint violation probability. We also set $\delta\in\{25, 50\}$ to assign small and large uncertainty interval in the weight of items with uniform distribution. To ensure that the weights of items subject to uncertainty are positive, we also add a value of 100 to all weights to avoid negative values.
 
We use dynamic programming to find the exact optimal solution with maximal profit $P(x^*)$ for the deterministic variant of the knapsack problem. Therefore, we record $P(x^*)$ for every dynamic capacity change for each knapsack instances based on $r$ and $\tau$.

To evaluate the performance of our algorithms for DCCKP, we consider the offline error which represents the distance between the algorithm best-obtained solution in each iteration with respect to $P(x^*)$. Let $x$ be the best solution obtained by the considered algorithm in iteration $i$. The offline error for iteration $i$ is given as

\begin{equation*}
\label{eq.error}
\resizebox{.95\columnwidth}{!}{
$         \phi_i=\begin{cases}
P(x^*)-P(x) & \text{if} \: \text{Pr}\left[W(x) \geq C\right] \leq \alpha \\
\\[1pt]
\left(1+\text{Pr}\left[W(x) \geq C\right]\right)P(x^*) & \text{otherwise.} \\
\end{cases}\\
$
}
 \end{equation*}
 Note, that every solution $x$ not meeting the chance constraint receives a higher offline error than any solution meeting the chance constraint.
The total offline error 
\begin{equation*}
\Phi=\frac{\sum_{i=1}^{10^6}\phi_i}{10^6}.\\
\end{equation*}
is the summation of offline error at each iteration divided by the number of total iterations ($10^6$).

\subsection{Experimental Results} \label{sec.results}
\begin{table*} 
\caption{Statistical results of total offline error for (1+1)-EA and POSDC with small change in the dynamic constraint}
\label{table.500}
\begin{adjustbox}{max width=2\columnwidth}
\centering
\begin{tabular}{@{}cllllrrlrrlrrlrrl@{}}
\toprule
&$r$ &$\tau$ &$\delta$ &$\alpha$ &\multicolumn{3}{c}{(1+1)-EA-Chebyshev (1)} & \multicolumn{3}{c}{(1+1)-EA-Chernoff (2)} & \multicolumn{3}{c}{POSDC-Chebyshev (3)} & \multicolumn{3}{c}{POSDC-Chernoff (4)} \\
\cmidrule{6-8} \cmidrule{9-11} \cmidrule{12-14} \cmidrule{15-17}
\parbox[t]{2mm}{\multirow{16}{*}{\rotatebox[origin=c]{90}{uncorrelated}}} &     &      &       &          & Mean      & Std     & Stat    & Mean     & Std    & Stat    & Mean      & Std     & Stat    & Mean     & Std    & Stat    \\
\midrule
&500  & 100  & 25  & 0.01   & 4232.51  & 475.50  & $2^{(*)}, 3^{(-)}, 4^{(-)}$ & 4288.23 & 481.60 & $1^{(*)}, 3^{(-)}, 4^{(-)}$ & 1485.56  & 177.13  & $1^{(+)}, 2^{(+)}, 4^{(*)}$ & 1381.90 & 183.45 & $1^{(+)}, 2^{(+)}, 3^{(*)}$ \\
&500  & 100  & 25  & 0.001  & 5537.00  & 565.70  & $2^{(-)}, 3^{(-)}, 4^{(-)}$ & 4457.91 & 512.51 & $1^{(+)}, 3^{(-)}, 4^{(-)}$ & 3162.58  & 392.02  & $1^{(+)}, 2^{(+)}, 4^{(-)}$ & 1561.59 & 210.73 & $1^{(+)}, 2^{(+)}, 3^{(+)}$ \\
&500  & 100  & 25  & 0.0001 & 9869.75  & 912.78  & $2^{(-)}, 3^{(-)}, 4^{(-)}$ & 4590.04 & 518.27 & $1^{(+)}, 3^{(+)}, 4^{(-)}$ & 7854.36  & 797.79  & $1^{(+)}, 2^{(-)}, 4^{(-)}$ & 1718.78 & 243.25 & $1^{(+)}, 2^{(+)}, 3^{(+)}$ \\
&500  & 100  & 50  & 0.01   & 4862.10  & 512.38  & $2^{(*)}, 3^{(-)}, 4^{(-)}$ & 4916.24 & 540.29 & $1^{(*)}, 3^{(-)}, 4^{(-)}$ & 2266.92  & 281.86  & $1^{(+)}, 2^{(+)}, 4^{(*)}$ & 2062.41 & 291.98 & $1^{(+)}, 2^{(+)}, 3^{(*)}$ \\
&500  & 100  & 50  & 0.001  & 7684.09  & 742.16  & $2^{(-)}, 3^{(-)}, 4^{(-)}$ & 5252.78 & 593.48 & $1^{(+)}, 3^{(*)}, 4^{(-)}$ & 5442.18  & 616.59  & $1^{(+)}, 2^{(*)}, 4^{(-)}$ & 2414.63 & 346.82 & $1^{(+)}, 2^{(+)}, 3^{(+)}$ \\
&500  & 100  & 50  & 0.0001 & 14435.49 & 1539.90 & $2^{(-)}, 3^{(*)}, 4^{(-)}$ & 5559.46 & 621.60 & $1^{(+)}, 3^{(+)}, 4^{(-)}$ & 13477.00 & 1291.40 & $1^{(*)}, 2^{(-)}, 4^{(-)}$ & 2730.43 & 400.21 & $1^{(+)}, 2^{(+)}, 3^{(+)}$ \\
&500  & 1000 & 25  & 0.01   & 2498.45  & 122.34  & $2^{(*)}, 3^{(-)}, 4^{(-)}$ & 2425.96 & 102.79 & $1^{(*)}, 3^{(-)}, 4^{(-)}$ & 1004.26  & 48.61   & $1^{(+)}, 2^{(+)}, 4^{(*)}$ & 896.22  & 77.77  & $1^{(+)}, 2^{(+)}, 3^{(*)}$ \\
&500  & 1000 & 25  & 0.001  & 4240.41  & 286.22  & $2^{(-)}, 3^{(-)}, 4^{(-)}$ & 2655.49 & 108.46 & $1^{(+)}, 3^{(+)}, 4^{(-)}$ & 2900.27  & 130.49  & $1^{(+)}, 2^{(-)}, 4^{(-)}$ & 1125.94 & 102.26 & $1^{(+)}, 2^{(+)}, 3^{(+)}$ \\
&500  & 1000 & 25  & 0.0001 & 8477.51  & 1091.61 & $2^{(-)}, 3^{(*)}, 4^{(-)}$ & 2837.93 & 119.76 & $1^{(+)}, 3^{(+)}, 4^{(-)}$ & 7526.05  & 817.63  & $1^{(*)}, 2^{(-)}, 4^{(-)}$ & 1331.74 & 126.31 & $1^{(+)}, 2^{(+)}, 3^{(+)}$ \\
&500  & 1000 & 50  & 0.01   & 3342.10  & 178.28  & $2^{(*)}, 3^{(-)}, 4^{(-)}$ & 3195.27 & 148.83 & $1^{(*)}, 3^{(-)}, 4^{(-)}$ & 1904.68  & 83.35   & $1^{(+)}, 2^{(+)}, 4^{(-)}$ & 1719.30 & 130.07 & $1^{(+)}, 2^{(+)}, 3^{(+)}$ \\
&500  & 1000 & 50  & 0.001  & 6440.89  & 611.99  & $2^{(-)}, 3^{(-)}, 4^{(-)}$ & 3633.98 & 165.28 & $1^{(+)}, 3^{(+)}, 4^{(-)}$ & 5282.87  & 380.55  & $1^{(+)}, 2^{(-)}, 4^{(-)}$ & 2162.80 & 167.66 & $1^{(+)}, 2^{(+)}, 3^{(+)}$ \\
&500  & 1000 & 50  & 0.0001 & 11975.96 & 2404.11 & $2^{(-)}, 3^{(*)}, 4^{(-)}$ & 3983.10 & 193.16 & $1^{(+)}, 3^{(+)}, 4^{(-)}$ & 11652.85 & 2129.08 & $1^{(*)}, 2^{(-)}, 4^{(-)}$ & 2555.44 & 201.20 & $1^{(+)}, 2^{(+)}, 3^{(+)}$ \\
\cmidrule{2-17}
\parbox[t]{2mm}{\multirow{12}{*}{\rotatebox[origin=c]{90}{bounded-strongly-correlated}}} 
&500  & 100  & 25  & 0.01   & 3287.08  & 390.63  & $2^{(*)}, 3^{(-)}, 4^{(-)}$ & 3333.50 & 389.40 & $1^{(*)}, 3^{(-)}, 4^{(-)}$ & 1523.05  & 166.13  & $1^{(+)}, 2^{(+)}, 4^{(*)}$ & 1400.05 & 125.34 & $1^{(+)}, 2^{(+)}, 3^{(*)}$ \\
&500  & 100  & 25  & 0.001  & 4763.94  & 780.09  & $2^{(-)}, 3^{(-)}, 4^{(-)}$ & 3509.95 & 428.48 & $1^{(+)}, 3^{(*)}, 4^{(-)}$ & 3251.30  & 454.50  & $1^{(+)}, 2^{(*)}, 4^{(-)}$ & 1583.87 & 142.56 & $1^{(+)}, 2^{(+)}, 3^{(+)}$ \\
&500  & 100  & 25  & 0.0001 & 9387.44  & 2060.47 & $2^{(-)}, 3^{(*)}, 4^{(-)}$ & 3674.83 & 446.95 & $1^{(+)}, 3^{(+)}, 4^{(-)}$ & 7843.42  & 1450.53 & $1^{(*)}, 2^{(-)}, 4^{(-)}$ & 1745.45 & 156.64 & $1^{(+)}, 2^{(+)}, 3^{(+)}$ \\
&500  & 100  & 50  & 0.01   & 3998.90  & 528.67  & $2^{(*)}, 3^{(-)}, 4^{(-)}$ & 4052.63 & 516.41 & $1^{(*)}, 3^{(-)}, 4^{(-)}$ & 2327.44  & 290.44  & $1^{(+)}, 2^{(+)}, 4^{(*)}$ & 2107.63 & 210.29 & $1^{(+)}, 2^{(+)}, 3^{(*)}$ \\
&500  & 100  & 50  & 0.001  & 7092.19  & 1348.32 & $2^{(-)}, 3^{(-)}, 4^{(-)}$ & 4405.88 & 575.40 & $1^{(+)}, 3^{(+)}, 4^{(-)}$ & 5516.23  & 906.14  & $1^{(+)}, 2^{(-)}, 4^{(-)}$ & 2465.45 & 254.71 & $1^{(+)}, 2^{(+)}, 3^{(+)}$ \\
&500  & 100  & 50  & 0.0001 & 13743.81 & 3964.29 & $2^{(-)}, 3^{(*)}, 4^{(-)}$ & 4736.65 & 625.25 & $1^{(+)}, 3^{(+)}, 4^{(-)}$ & 12936.38 & 3124.07 & $1^{(*)}, 2^{(-)}, 4^{(-)}$ & 2790.44 & 288.47 & $1^{(+)}, 2^{(+)}, 3^{(+)}$ \\
&500  & 1000 & 25  & 0.01   & 1971.76  & 244.69  & $2^{(*)}, 3^{(-)}, 4^{(-)}$ & 1892.49 & 231.54 & $1^{(*)}, 3^{(-)}, 4^{(-)}$ & 823.88   & 116.26  & $1^{(+)}, 2^{(+)}, 4^{(*)}$ & 730.56  & 60.10  & $1^{(+)}, 2^{(+)}, 3^{(*)}$ \\
&500  & 1000 & 25  & 0.001  & 3455.36  & 486.91  & $2^{(-)}, 3^{(-)}, 4^{(-)}$ & 2063.78 & 247.58 & $1^{(+)}, 3^{(*)}, 4^{(-)}$ & 2265.75  & 288.37  & $1^{(+)}, 2^{(*)}, 4^{(-)}$ & 905.61  & 74.11  & $1^{(+)}, 2^{(+)}, 3^{(+)}$ \\
&500  & 1000 & 25  & 0.0001 & 6931.91  & 1328.75 & $2^{(-)}, 3^{(*)}, 4^{(-)}$ & 2226.64 & 246.62 & $1^{(+)}, 3^{(+)}, 4^{(-)}$ & 5709.88  & 949.12  & $1^{(*)}, 2^{(-)}, 4^{(-)}$ & 1059.64 & 79.37  & $1^{(+)}, 2^{(+)}, 3^{(+)}$ \\
&500  & 1000 & 50  & 0.01   & 2694.66  & 351.22  & $2^{(*)}, 3^{(-)}, 4^{(-)}$ & 2539.21 & 302.30 & $1^{(*)}, 3^{(-)}, 4^{(-)}$ & 1513.94  & 197.83  & $1^{(+)}, 2^{(+)}, 4^{(*)}$ & 1358.32 & 120.78 & $1^{(+)}, 2^{(+)}, 3^{(*)}$ \\
&500  & 1000 & 50  & 0.001  & 5297.71  & 876.15  & $2^{(-)}, 3^{(-)}, 4^{(-)}$ & 2883.68 & 335.21 & $1^{(+)}, 3^{(+)}, 4^{(-)}$ & 4044.39  & 603.89  & $1^{(+)}, 2^{(-)}, 4^{(-)}$ & 1694.02 & 145.32 & $1^{(+)}, 2^{(+)}, 3^{(+)}$ \\
&500  & 1000 & 50  & 0.0001 & 9534.51  & 2279.76 & $2^{(-)}, 3^{(*)}, 4^{(-)}$ & 3182.34 & 366.82 & $1^{(+)}, 3^{(+)}, 4^{(-)}$ & 8831.99  & 1787.84 & $1^{(*)}, 2^{(-)}, 4^{(-)}$ & 1990.19 & 163.30 & $1^{(+)}, 2^{(+)}, 3^{(+)}$ \\
\bottomrule
\end{tabular}
\end{adjustbox}
\end{table*}

\begin{table*} 
\caption{Statistical results of total offline error for (1+1)-EA and POSDC with large change in the dynamic constraint}
\label{table.2000}
\begin{adjustbox}{max width=2\columnwidth}
\centering
\begin{tabular}{@{}cllllrrlrrlrrlrrl@{}}
\toprule
&$r$ &$\tau$ &$\delta$ &$\alpha$ &\multicolumn{3}{c}{(1+1)-EA-Chebyshev (1)} & \multicolumn{3}{c}{(1+1)-EA-Chernoff (2)} & \multicolumn{3}{c}{POSDC-Chebyshev (3)} & \multicolumn{3}{c}{POSDC-Chernoff (4)} \\
\cmidrule{6-8} \cmidrule{9-11} \cmidrule{12-14} \cmidrule{15-17}
\parbox[t]{2mm}{\multirow{16}{*}{\rotatebox[origin=c]{90}{uncorrelated}}} &     &      &       &          & Mean      & Std     & Stat    & Mean     & Std    & Stat     & Mean      & Std     & Stat    & Mean     & Std    & Stat    \\
\midrule
&2000 & 100  & 25  & 0.01   & 5948.81  & 569.75  & $2^{(*)}, 3^{(-)}, 4^{(-)}$ & 6018.58 & 560.15 & $1^{(*)}, 3^{(-)}, 4^{(-)}$ & 1931.58  & 366.87  & $1^{(+)}, 2^{(+)}, 4^{(*)}$ & 1909.30 & 381.82 & $1^{(+)}, 2^{(+)}, 3^{(*)}$ \\
&2000 & 100  & 25  & 0.001  & 6387.66  & 508.07  & $2^{(*)}, 3^{(-)}, 4^{(-)}$ & 6074.30 & 577.59 & $1^{(*)}, 3^{(-)}, 4^{(-)}$ & 3133.97  & 466.62  & $1^{(+)}, 2^{(+)}, 4^{(-)}$ & 2009.58 & 388.93 & $1^{(+)}, 2^{(+)}, 3^{(+)}$ \\
&2000 & 100  & 25  & 0.0001 & 10237.76 & 594.11  & $2^{(-)}, 3^{(-)}, 4^{(-)}$ & 6170.77 & 579.85 & $1^{(+)}, 3^{(*)}, 4^{(-)}$ & 7010.05  & 698.68  & $1^{(+)}, 2^{(*)}, 4^{(-)}$ & 2102.31 & 402.86 & $1^{(+)}, 2^{(+)}, 3^{(+)}$ \\
&2000 & 100  & 50  & 0.01   & 6328.16  & 563.72  & $2^{(*)}, 3^{(-)}, 4^{(-)}$ & 6399.28 & 594.11 & $1^{(*)}, 3^{(-)}, 4^{(-)}$ & 2476.40  & 410.13  & $1^{(+)}, 2^{(+)}, 4^{(*)}$ & 2378.09 & 430.97 & $1^{(+)}, 2^{(+)}, 3^{(*)}$ \\
&2000 & 100  & 50  & 0.001  & 8198.35  & 556.15  & $2^{(-)}, 3^{(-)}, 4^{(-)}$ & 6592.78 & 582.25 & $1^{(+)}, 3^{(-)}, 4^{(-)}$ & 4963.66  & 603.48  & $1^{(+)}, 2^{(+)}, 4^{(-)}$ & 2601.97 & 454.21 & $1^{(+)}, 2^{(+)}, 3^{(+)}$ \\
&2000 & 100  & 50  & 0.0001 & 15154.74 & 668.43  & $2^{(-)}, 3^{(-)}, 4^{(-)}$ & 6794.26 & 590.57 & $1^{(+)}, 3^{(+)}, 4^{(-)}$ & 12102.77 & 742.79  & $1^{(+)}, 2^{(-)}, 4^{(-)}$ & 2806.32 & 467.27 & $1^{(+)}, 2^{(+)}, 3^{(+)}$ \\
&2000 & 1000 & 25  & 0.01   & 3027.71  & 377.45  & $2^{(*)}, 3^{(-)}, 4^{(-)}$ & 2966.82 & 374.67 & $1^{(*)}, 3^{(-)}, 4^{(-)}$ & 974.46   & 188.68  & $1^{(+)}, 2^{(+)}, 4^{(*)}$ & 874.60  & 190.49 & $1^{(+)}, 2^{(+)}, 3^{(*)}$ \\
&2000 & 1000 & 25  & 0.001  & 4429.57  & 502.95  & $2^{(-)}, 3^{(-)}, 4^{(-)}$ & 3120.19 & 381.19 & $1^{(+)}, 3^{(*)}, 4^{(-)}$ & 2556.49  & 361.79  & $1^{(+)}, 2^{(*)}, 4^{(-)}$ & 1040.90 & 210.60 & $1^{(+)}, 2^{(+)}, 3^{(+)}$ \\
&2000 & 1000 & 25  & 0.0001 & 8650.49  & 843.44  & $2^{(-)}, 3^{(-)}, 4^{(-)}$ & 3255.02 & 416.14 & $1^{(+)}, 3^{(+)}, 4^{(-)}$ & 6959.35  & 732.40  & $1^{(+)}, 2^{(-)}, 4^{(-)}$ & 1186.47 & 235.49 & $1^{(+)}, 2^{(+)}, 3^{(+)}$ \\
&2000 & 1000 & 50  & 0.01   & 3714.96  & 442.43  & $2^{(*)}, 3^{(-)}, 4^{(-)}$ & 3558.83 & 432.43 & $1^{(*)}, 3^{(-)}, 4^{(-)}$ & 1704.27  & 270.78  & $1^{(+)}, 2^{(+)}, 4^{(*)}$ & 1513.22 & 278.61 & $1^{(+)}, 2^{(+)}, 3^{(*)}$ \\
&2000 & 1000 & 50  & 0.001  & 6464.63  & 664.72  & $2^{(-)}, 3^{(-)}, 4^{(-)}$ & 3872.78 & 476.96 & $1^{(+)}, 3^{(*)}, 4^{(-)}$ & 4697.95  & 565.23  & $1^{(+)}, 2^{(*)}, 4^{(-)}$ & 1845.29 & 322.04 & $1^{(+)}, 2^{(+)}, 3^{(+)}$ \\
&2000 & 1000 & 50  & 0.0001 & 13380.75 & 1349.08 & $2^{(-)}, 3^{(*)}, 4^{(-)}$ & 4159.55 & 509.55 & $1^{(+)}, 3^{(+)}, 4^{(-)}$ & 12017.35 & 1151.98 & $1^{(*)}, 2^{(-)}, 4^{(-)}$ & 2138.14 & 367.74 & $1^{(+)}, 2^{(+)}, 3^{(+)}$ \\
\cmidrule{2-17}
\parbox[t]{2mm}{\multirow{12}{*}{\rotatebox[origin=c]{90}{bounded-strongly-correlated}}}
&2000 & 100  & 25  & 0.01   & 4560.36  & 185.97  & $2^{(*)}, 3^{(-)}, 4^{(-)}$ & 4568.83 & 197.34 & $1^{(*)}, 3^{(-)}, 4^{(-)}$ & 1840.56  & 84.26   & $1^{(+)}, 2^{(+)}, 4^{(*)}$ & 1712.51 & 123.38 & $1^{(+)}, 2^{(+)}, 3^{(*)}$ \\
&2000 & 100  & 25  & 0.001  & 5784.27  & 319.36  & $2^{(-)}, 3^{(-)}, 4^{(-)}$ & 4718.52 & 189.73 & $1^{(+)}, 3^{(-)}, 4^{(-)}$ & 3795.55  & 168.68  & $1^{(+)}, 2^{(+)}, 4^{(-)}$ & 1896.19 & 95.79  & $1^{(+)}, 2^{(+)}, 3^{(+)}$ \\
&2000 & 100  & 25  & 0.0001 & 12130.92 & 1256.94 & $2^{(-)}, 3^{(-)}, 4^{(-)}$ & 4879.27 & 180.63 & $1^{(+)}, 3^{(+)}, 4^{(-)}$ & 9177.87  & 928.21  & $1^{(+)}, 2^{(-)}, 4^{(-)}$ & 2063.82 & 79.10  & $1^{(+)}, 2^{(+)}, 3^{(+)}$ \\
&2000 & 100  & 50  & 0.01   & 5337.16  & 166.82  & $2^{(*)}, 3^{(-)}, 4^{(-)}$ & 5291.89 & 176.82 & $1^{(*)}, 3^{(-)}, 4^{(-)}$ & 2745.54  & 52.90   & $1^{(+)}, 2^{(+)}, 4^{(-)}$ & 2484.25 & 54.08  & $1^{(+)}, 2^{(+)}, 3^{(+)}$ \\
&2000 & 100  & 50  & 0.001  & 8834.24  & 746.73  & $2^{(-)}, 3^{(-)}, 4^{(-)}$ & 5653.89 & 184.06 & $1^{(+)}, 3^{(+)}, 4^{(-)}$ & 6452.92  & 512.65  & $1^{(+)}, 2^{(-)}, 4^{(-)}$ & 2862.14 & 42.48  & $1^{(+)}, 2^{(+)}, 3^{(+)}$ \\
&2000 & 100  & 50  & 0.0001 & 19641.09 & 2649.89 & $2^{(-)}, 3^{(-)}, 4^{(-)}$ & 5987.13 & 204.54 & $1^{(+)}, 3^{(+)}, 4^{(-)}$ & 15189.24 & 2029.02 & $1^{(+)}, 2^{(-)}, 4^{(-)}$ & 3205.05 & 58.24  & $1^{(+)}, 2^{(+)}, 3^{(+)}$ \\
&2000 & 1000 & 25  & 0.01   & 2508.30  & 264.64  & $2^{(*)}, 3^{(-)}, 4^{(-)}$ & 2390.56 & 233.31 & $1^{(*)}, 3^{(-)}, 4^{(-)}$ & 963.19   & 99.04   & $1^{(+)}, 2^{(+)}, 4^{(-)}$ & 828.44  & 45.06  & $1^{(+)}, 2^{(+)}, 3^{(+)}$ \\
&2000 & 1000 & 25  & 0.001  & 4120.10  & 557.29  & $2^{(-)}, 3^{(-)}, 4^{(-)}$ & 2581.26 & 264.54 & $1^{(+)}, 3^{(*)}, 4^{(-)}$ & 2568.42  & 347.75  & $1^{(+)}, 2^{(*)}, 4^{(-)}$ & 1010.08 & 58.55  & $1^{(+)}, 2^{(+)}, 3^{(+)}$ \\
&2000 & 1000 & 25  & 0.0001 & 8550.48  & 1602.26 & $2^{(-)}, 3^{(*)}, 4^{(-)}$ & 2745.14 & 281.97 & $1^{(+)}, 3^{(+)}, 4^{(-)}$ & 6518.39  & 1139.10 & $1^{(*)}, 2^{(-)}, 4^{(-)}$ & 1163.33 & 72.29  & $1^{(+)}, 2^{(+)}, 3^{(+)}$ \\
&2000 & 1000 & 50  & 0.01   & 3302.84  & 395.53  & $2^{(*)}, 3^{(-)}, 4^{(-)}$ & 3103.47 & 336.52 & $1^{(*)}, 3^{(-)}, 4^{(-)}$ & 1732.35  & 211.93  & $1^{(+)}, 2^{(+)}, 4^{(*)}$ & 1495.03 & 118.31 & $1^{(+)}, 2^{(+)}, 3^{(*)}$ \\
&2000 & 1000 & 50  & 0.001  & 6334.26  & 1019.02 & $2^{(-)}, 3^{(-)}, 4^{(-)}$ & 3455.32 & 379.88 & $1^{(+)}, 3^{(+)}, 4^{(-)}$ & 4582.94  & 717.91  & $1^{(+)}, 2^{(-)}, 4^{(-)}$ & 1844.94 & 154.35 & $1^{(+)}, 2^{(+)}, 3^{(+)}$ \\
&2000 & 1000 & 50  & 0.0001 & 12880.38 & 2979.96 & $2^{(-)}, 3^{(*)}, 4^{(-)}$ & 3757.97 & 409.95 & $1^{(+)}, 3^{(+)}, 4^{(-)}$ & 10047.45 & 2229.63 & $1^{(*)}, 2^{(-)}, 4^{(-)}$ & 2154.25 & 185.00 & $1^{(+)}, 2^{(+)}, 3^{(+)}$ \\
\bottomrule
\end{tabular}
\end{adjustbox}
\end{table*}

\begin{table*}
\caption{Statistical results of total offline error for NSGA-II with large change in the dynamic constraint ($r=2000$)}\label{table.nsga}
\begin{adjustbox}{max width=2\columnwidth}
\centering
\begin{tabular}{@{}cllrrlrrclrrlrrl@{}}
\toprule
$\tau$ &$\delta$ &$\alpha$ &\multicolumn{6}{c}{uncorrelated} && \multicolumn{6}{c}{bounded-strongly correlated}\\
&         &         &\multicolumn{3}{c}{NSGA-II-Chebyshev (5)} & \multicolumn{3}{c}{NSGA-II-Chernoff (6)} && \multicolumn{3}{c}{NSGA-II-Chebyshev (5)} & \multicolumn{3}{c}{NSGA-II-Chernoff (6)} \\
\cmidrule{4-6} \cmidrule{7-9} \cmidrule{11-13} \cmidrule{14-16}
&       &          & Mean      & Std     & Stat    & Mean     & Std    & Stat &    & Mean      & Std     & Stat    & Mean     & Std    & Stat    \\
\midrule
100  & 25  & 0.01   & 2215.77  & 295.97  & $3^{(*)}, 4^{(*)}, 6^{(*)}$ & 2130.59 & 279.40  & $3^{(*)}, 4^{(-)}, 5^{(*)}$ & &2390.79 & 189.51  & $3^{(-)}, 4^{(-)}, 6^{(*)}$ & 2234.28 & 194.74 & $3^{(-)}, 4^{(-)}, 5^{(*)}$ \\
100  & 25  & 0.001  & 3509.35  & 421.03  & $3^{(*)}, 4^{(-)}, 6^{(-)}$ & 2268.36 & 289.77 & $3^{(*)}, 4^{(*)}, 5^{(+)}$ & &4399.93 & 374.16  & $3^{(*)}, 4^{(-)}, 6^{(-)}$ & 2416.83 & 148.89 & $3^{(*)}, 4^{(*)}, 5^{(+)}$ \\
100  & 25  & 0.0001 & 7401.77  & 621.03  & $3^{(*)}, 4^{(-)}, 6^{(-)}$ & 2387.85 & 290.26 & $3^{(*)}, 4^{(*)}, 5^{(+)}$ & &9648.37 & 1205.88 & $3^{(*)}, 4^{(-)}, 6^{(-)}$ & 2652.75 & 166.96 & $3^{(+)}, 4^{(*)}, 5^{(+)}$ \\
100  & 50  & 0.01   & 2828.78  & 329.66  & $3^{(-)}, 4^{(-)}, 6^{(*)}$ & 2637.66 & 327.82 & $3^{(+)}, 4^{(*)}, 5^{(+)}$ & &3342.50  & 234.54  & $3^{(-)}, 4^{(-)}, 6^{(*)}$ & 3042.16 & 200.33 & $3^{(*)}, 4^{(-)}, 5^{(*)}$ \\
100  & 50  & 0.001  & 5358.32  & 552.16  & $3^{(*)}, 4^{(-)}, 6^{(-)}$ & 2905.99 & 352.18 & $3^{(*)}, 4^{(*)}, 5^{(*)}$ & &6993.80  & 744.54  & $3^{(*)}, 4^{(-)}, 6^{(-)}$ & 3439.87 & 215.47 & $3^{(+)}, 4^{(*)}, 5^{(+)}$ \\
100  & 50  & 0.0001 & 12392.67 & 677.71  & $3^{(*)}, 4^{(-)}, 6^{(-)}$ & 3150.29 & 392.47 & $3^{(+)}, 4^{(*)}, 5^{(+)}$ & &15160.00   & 2418.27 & $3^{(*)}, 4^{(-)}, 6^{(-)}$ & 3798.48 & 239.52 & $3^{(+)}, 4^{(*)}, 5^{(+)}$ \\
1000 & 25  & 0.01   & 1275.93  & 157.45  & $3^{(-)}, 4^{(-)}, 6^{(*)}$ & 1170.70  & 157.26 & $3^{(*)}, 4^{(-)}, 5^{(*)}$ & &1123.35 & 150.54  & $3^{(-)}, 4^{(-)}, 6^{(*)}$ & 1009.96 & 115.48 & $3^{(*)}, 4^{(-)}, 5^{(*)}$ \\
1000 & 25  & 0.001  & 2844.91  & 318.58  & $3^{(*)}, 4^{(-)}, 6^{(-)}$ & 1333.13 & 168.23 & $3^{(+)}, 4^{(*)}, 5^{(+)}$ & &2726.67 & 392.87  & $3^{(*)}, 4^{(-)}, 6^{(-)}$ & 1198.56 & 122.26 & $3^{(+)}, 4^{(*)}, 5^{(+)}$ \\
1000 & 25  & 0.0001 & 7228.72  & 700.04  & $3^{(*)}, 4^{(-)}, 6^{(-)}$ & 1495.80  & 180.51 & $3^{(+)}, 4^{(*)}, 5^{(+)}$ & &6597.74 & 1210.95 & $3^{(*)}, 4^{(-)}, 6^{(-)}$ & 1360.20  & 150.56 & $3^{(+)}, 4^{(*)}, 5^{(+)}$ \\
1000 & 50  & 0.01   & 2016.38  & 233.56  & $3^{(-)}, 4^{(-)}, 6^{(*)}$ & 1812.49 & 222.10  & $3^{(*)}, 4^{(*)}, 5^{(*)}$ & &1872.84 & 237.35  & $3^{(*)}, 4^{(-)}, 6^{(*)}$ & 1682.10  & 184.08 & $3^{(*)}, 4^{(*)}, 5^{(*)}$ \\
1000 & 50  & 0.001  & 4967.78  & 537.00     & $3^{(*)}, 4^{(-)}, 6^{(-)}$ & 2153.47 & 265.98 & $3^{(+)}, 4^{(*)}, 5^{(+)}$ & &4679.54 & 753.61  & $3^{(*)}, 4^{(-)}, 6^{(-)}$ & 2032.80  & 215.07 & $3^{(+)}, 4^{(*)}, 5^{(+)}$ \\
1000 & 50  & 0.0001 & 12192.94 & 1174.98 & $3^{(*)}, 4^{(-)}, 6^{(-)}$ & 2447.42 & 306.57 & $3^{(+)}, 4^{(*)}, 5^{(+)}$ & &9885.44 & 2350.72 & $3^{(*)}, 4^{(-)}, 6^{(-)}$ & 2342.47 & 253.88 & $3^{(+)}, 4^{(*)}, 5^{(+)}$ \\
\bottomrule
\end{tabular}
\end{adjustbox}
\end{table*}
We combine the parameters of $r$, $\tau$, $\alpha$ and, $\delta$ to produce DCCKP test problem instances for uncorrelated and bounded strongly correlated with different types of complexities. For instance, a test problem with $r=2000$, $\tau=100$, $\alpha=0.0001$ and $\delta=50$ represents the most difficult test problem; because the magnitude of dynamic change in the knapsack capacity is large and the capacity changes very fast every 100 iterations. Also, the allowable probability of chance-constraint violation is very tight, and the uncertainty interval in the weight of items is big.
 
We apply POSDC and (1+1)-EA integrated with Chebyshev and Chernoff inequality tails to DCCKP instances. Specifically, we investigate the following algorithms:
	\begin{itemize}
		\item (1+1)-EA with Chebyshev's inequality: (1)
		\item (1+1)-EA with Chernoff bound: (2)		
		\item POSDC with Chebyshev's inequality: (3)
		\item POSDC with Chernoff bound: (4) 
	\end{itemize} 
	Each algorithm initially runs for $10^4$ warm-up iterations before the first change in the capacity occurs and continues for $10^6$ iterations. Tables \ref{table.500} and \ref{table.2000} report the performance of single-objective and bi-objective optimization by the average and standard deviation of total offline error for 30 independent runs. Lower total offline error is better because it shows the algorithm was closer to the $P(x^*)$ for each iteration. Note that when the problem becomes more uncertain, the feasible region (without violating the probabilistic constraint) becomes more restrictive and the offline error will be increased.
 
Statistical comparisons are carried out by using the Kruskal-Wallis test with 95\% confidence interval integrated with the posteriori Bonferroni test to compare multiple solutions \cite{Corder2014}. The stat column shows the rank of each algorithm in the instances; If two algorithms can be compared with each other significantly, $X^{(+)}$ denotes that the current algorithm is outperforming algorithm $X$. Likewise, $X^{(-)}$ signifies the current algorithm is worse than the algorithm $X$ significantly. Otherwise, $X^{(*)}$ shows that the current algorithm is not different significantly with algorithm $X$. For example, numbers $1^{(+)}, 3^{(*)}, 4^{(-)}$ denote the pairwise performance of algorithm (2). The numbers show that algorithm (2) is statistically better than algorithm (1); it is not different from algorithm (3) and it is inferior to algorithm (4).

Table \ref{table.500} lists the results when $r$ is 500. We observe that when the environment of the problem becomes more complex, finding a solution which has a close distance to the optimal solution is harder. As $\tau$ decreases, $\delta$ increases and $\alpha$ becomes tighter, the offline error for both (1+1)-EA and POSDC increases. However, as the problem becomes more difficult to solve, POSDC obtains solutions with a lower total offline error and lower standard deviation in comparison with (1+1)-EA. We also find that the algorithms which use Chernoff bound outperform other algorithms which use the Chebyshev's inequality. 
 
Table \ref{table.2000} lists our results when $r$ is 2000. We observe that POSDC can obtain better solutions in comparison with (1+1)-EA. When we consider a bigger magnitude of changes in the constraint bound, the population size of non-dominated solutions in POSDC is bigger than when $r$ is 500; because $\eta$ is equal to $r$, POSDC covers a bigger range of solutions which leads to a bigger population.
 
Therefore, when the changes occur faster (smaller $\tau$), POSDC has less time to evolve its population. POSDC only mutates one individual chosen randomly in its population, leading to a lower chance of choosing the best individual for the mutation in its population. In contrast, (1+1)-EA only handles one individual, mutates and improves it on all iterations. Introducing our second objective function for the bi-objective optimization approach helps POSDC to tackle all these drawbacks and outperform its counterpart single-objective approach; because trade-off solutions contain more information in principle of finding better solutions.

For further investigation of our bi-objective optimization, we also apply the Non-dominated Sorting Genetic Algorithm (NSGA-II)~\cite{NSGA2}, which is a state of the art multi-objective EA when dealing with two objectives. We run NSGA-II with a population size of $20$ using Chebyshev and Chernoff inequality tails which are algorithms (5) and (6), respectively in Table \ref{table.nsga}. Table \ref{table.nsga} shows the results of NSGA-II when $r$ is 2000 for uncorrelated and bounded strongly correlated instances and compares the performance of NSGA-II with POSDC. For brevity, we only report stats of comparison between NSGA-II and POSDC.

To have a fair comparison, we modify NSGA-II to keep the best-obtained solution for the given knapsack bound $C$ in each iteration. Table \ref{table.nsga} shows that in most of the instances, NSGA-II performs as good as POSDC when using the Chernoff bound. However, POSDC can outperform NSGA-II in instances where $\delta=25$ and $\alpha=0.01$, which is the most straightforward instance. The main difference between NSGA-II and POSDC is the selection mechanism. NSGA-II uses the crowding distance sorting to maintain diversity through the evolution of its population. This comparison can point out the possible research line to further investigate state-of-art non-baseline EAs and multi-objective EAs solving DCCKPs. 

\section{\uppercase{Conclusions}} \label{sec.conclusion}
In this paper, we dealt with the dynamic chance-constrained knapsack problem where the constraint bound changes dynamically over time, and item weights are uncertain. The key part of our approach is to tackle the dynamic and stochastic components of an optimization problem in a holistic approach. For this purpose and to apply bi-objective optimization to the problem, we developed an objective $C^*$ which calculates for a given solution $x$ the smallest possible bound for which $x$ would meet the chance constraint. This objective function allows keeping a set of non-dominated solutions with different $C^*$ where an appropriate solution can be used to track the optimum after the dynamic constraint bound has changed. As it is hard to calculate the bound $C^*(x)$ in the stochastic setting exactly, we have shown how to calculate upper bounds for $C^*(x)$ based on Chernoff bound and Chebyshev's inequality. We evaluated the bi-objective optimization for a wide range of chance-constrained knapsack problems with dynamically changing constraint bounds. The results show that the bi-objective optimization with the introduced additional objective function can obtain better results than single-objective optimization in most cases. Note that we also applied NSGA-II to the problem to point out possible improvements by using state of the art algorithms. It would be interesting for future work to extend these investigations. In addition, our approach is not limited to dynamic chance-constrained knapsack problems and the formulation can be adapted to a wide range of other problems where we would formulate a similar second objective to deal with the chance constraint.
\ignore{
 We introduced a new second objective function to solve a single-objective problem. To investigate our approach, we applied a Pareto optimization method, namely, POSDC to the knapsack problem test problems and compared it with its single-objective counterpart (1+1)-EA. Our results demonstrate that our approach leads to obtaining more efficient solutions in comparison with the single-objective approach. Moreover, our approach using its population can adapt to the dynamic environment of our problem. POSDC can track the changes, and be highly effective even when the environment of the problem is highly noisy. (1+1)-EA only works with one individual and mutates it to produce a better solution in each step. However, our Pareto optimization approach employs a multi-individual population to solve the problem. Using a population brings an additional overhead in terms of the population size. However, the experiments show that our approach benefits with computing a diverse set of potential solutions which caters for dynamic constraint changes. Introducing the additional objective function to the problem helps the Pareto optimization algorithm to find trade-off solutions with respect to the profit and possible constraint bounds leading to obtaining better results in comparison with (1+1)-EA. 
}
\ack 
This work has been supported by the Australian Research Council through grant DP160102401 and by the South Australian Government through the Research Consortium "Unlocking Complex Resources through Lean Processing".
\bibliography{bib}
\end{document}